\documentclass[sigconf, nonacm]{acmart} 

\AtBeginDocument{%
  \providecommand\BibTeX{{%
    Bib\TeX}}}

\setcopyright{none}

\acmConference[KDD '26]{TODO}{TODO}{TODO}

\acmISBN{978-1-4503-XXXX-X/2018/06}

\usepackage{mathtools}
\usepackage{graphicx}
\usepackage{booktabs}
\usepackage{multirow}
\usepackage{hyperref}
\usepackage{amsmath, amsthm} 
\usepackage{bbm}
\usepackage{algorithm}
\usepackage{algpseudocode}
\usepackage{subcaption}
\usepackage[table]{xcolor}

\definecolor{verylightgray}{gray}{0.9}
\definecolor{myblue}{RGB}{0,90,180}

\newcommand{\vecX}{\mathbf{x}}

\newcommand{\R}{\mathbb{R}}
\newcommand{\vecDelta}{\boldsymbol \delta}

\begin{document}

\title{GENADA: efficient generative time series adversarial attack framework}

\author{Michael Baronov}
\authornote{These authors contributed equally to this research.}
\email{m.baronov@applied-ai.ru}
\author{Denis Vorobev}
\authornotemark[1]
\email{vorobiyov2005@gmail.com}
\author{Margarita Rusanova}
\authornotemark[2]
\email{mirusanova@gmail.com}
\affiliation{%
  \institution{$^*$Moscow Independent Research Institute of Artificial Intelligence}
  \institution{$^\dagger$HSE University}
  \city{Moscow}
  \country{Russia}}




\author{Petr Sokerin}
\affiliation{%
 \institution{Moscow Independent Research Institute of Artificial Intelligence}
  \city{Moscow}
  \country{Russia}}
\email{sokerinpo@mail.ru}



\author{Alexey Zaytsev}
\affiliation{%
  \institution{Moscow Independent Research Institute of Artificial Intelligence}
  \city{Moscow}
  \country{Russia}}
\email{likzet@gmail.com}


\begin{abstract}
Deep learning models are widely used for time series analysis in domains such as healthcare, finance, energy systems, and environmental monitoring. 
However, these models remain vulnerable to adversarial attacks, where small input perturbations cause severe degradation in predictive performance. 
Commonly used gradient-based attacks, iterative first-order methods, are computationally burdensome, as they repeatedly backpropagate through the victim model to compute input gradients during a number of iterative refinement steps.
We propose a GENerative ADversarial Attack (GENADA) that learns a generative model to produce deceptive perturbations directly in a single forward pass and a procedure to train it.
Variants include single-step and iterative generative attack schemes.
The validation considers attacks on several neural models and datasets in the time-series domain, a controlled, low-dimensional setting. 
Empirically, GENADA achieves comparable attack quality to strong baselines while requiring less time to generate perturbations during inference.

The code of our project is available in the anonymized github-repository: \href{https://anonymous.4open.science/r/adversarial-gen-attacks-6690}{https://anonymous.4open.science/r/adversarial-gen-attacks-6690}.

\keywords{First keyword  \and Second keyword \and Another keyword.}

\end{abstract}


\begin{CCSXML}
<ccs2012>
   <concept>
       <concept_id>10010147.10010257.10010293.10010294</concept_id>
       <concept_desc>Computing methodologies~Neural networks</concept_desc>
       <concept_significance>300</concept_significance>
       </concept>
   <concept>
       <concept_id>10010147.10010257.10010258.10010261.10010276</concept_id>
       <concept_desc>Computing methodologies~Adversarial learning</concept_desc>
       <concept_significance>300</concept_significance>
       </concept>
   <concept>
       <concept_id>10010147.10010257.10010258.10010259.10010263</concept_id>
       <concept_desc>Computing methodologies~Supervised learning by classification</concept_desc>
       <concept_significance>100</concept_significance>
       </concept>
   <concept>
       <concept_id>10002978.10003022</concept_id>
       <concept_desc>Security and privacy~Software and application security</concept_desc>
       <concept_significance>100</concept_significance>
       </concept>
 </ccs2012>
\end{CCSXML}

\ccsdesc[300]{Computing methodologies~Neural networks}
\ccsdesc[300]{Computing methodologies~Adversarial learning}
\ccsdesc[100]{Computing methodologies~Supervised learning by classification}
\ccsdesc[100]{Security and privacy~Software and application security}

\keywords{Adversarial attacks, Generative attacks, Time series classification}


\maketitle

\begin{acks}
The work was supported by the grant for research centers in the field of AI provided by the Ministry of Economic Development of the Russian Federation in accordance with the agreement 000000C313925P4F0002 and the agreement №139-10-2025-033.
\end{acks}

\section{Introduction}\label{sec:intro}
Sequential data arise in a wide range of applications, including healthcare, finance, energy systems, and environmental monitoring~\citep{marusov2023noncontrastive}. One of the central problems in this domain is time series classification, which plays a crucial role in forecasting, anomaly detection, and medical diagnosis~\citep{ismail2019deep}. Neural networks are among the most effective model families for time series classification tasks~\citep{ismail2019deep}.

Despite their strong predictive performance, neural networks are vulnerable to adversarial attacks: small, often imperceptible perturbations of the input may lead to significant drops in classification accuracy. While adversarial attacks have been extensively studied in computer vision~\citep{akhtar2018threat} and natural language processing~\citep{goyal2023survey}, time series data remain comparatively underexplored. At the same time, sequential signals possess specific structural properties—such as temporal correlations, smoothness constraints, and physical plausibility—that must be taken into account when designing attacks~\citep{sokerin2025concealed}.

Adversarial attacks are commonly categorized into white-box attacks, where gradients of the target model are available, and black-box attacks~\citep{zhang2022investigating}, where such access is absent. Although white-box attacks typically achieve higher attack effectiveness, they require access to the internal model, which is often unrealistic in practice. Moreover, because differentiation of the model tends to be slow, gradient-based attacks can be computationally expensive.

In this work, we propose a generative adversarial attack framework in which a separate neural network is trained to generate perturbations that maximize the target classifier's loss. Importantly, once trained, the generator can produce adversarial perturbations without computing gradients of the target model at inference time.

Our contributions are as follows:
\begin{enumerate}
\item We propose GENADA (GENerative ADversarial Attack), a generative adversarial attack that learns to produce valid perturbations directly. Once trained, the attacker outputs an adversarial perturbation 
by forward pass, avoiding per-example iterative optimization and eliminating the need for repeated gradient computations at inference time. The proposed attack is straightforward to train with standard learning pipelines and is substantially more efficient to run.

\item We propose a training procedure based on a frozen target model, enabling gradient-free inference. We train the generator to create perturbations that worsen the generator's predictions the most. We also provide the possibility of fine-tuning the training parameters, such as changing the number of trained iterations, the possibility of adding an additional step ~---~ distillation of iFGSM to our generator.

\item We validate the approach in the time-series domain, which provides a controlled low-dimensional setting for systematic experimentation while still capturing structured, temporally dependent signals. Empirically, we observe that the GENADA attains superior computational efficiency to strong baselines (in terms of the task-relevant degradation metrics) with comparable attack quality. The experiments are conducted on PowerCons, GunPoint and Strawberry datasets with metrics achieved , compared to metrics for the strongest alternative. We provide an extensive empirical evaluation across multiple datasets and architectures (recurrent, convolutional, and transformer-based models), comparing against strong white-box baselines such as FGSM and iFGSM.

\item We demonstrate that the proposed approach achieves competitive performance while significantly reducing attack generation time.

\end{enumerate}

\section{Related works}

\subsection{Adversarial Attacks}

Deep neural networks are known to be susceptible to adversarial perturbations~\citep{Szegedy2014}. Adversarial example generation methods are commonly categorized by attacker access (white-box vs. black-box), distortion measure (e.g., $\ell_p$-norms), and attack principle (gradient-based, optimization-based, or generative-based) \cite{akhtar2018threat}. In general, adversarial example generation aims to maximize the target model loss within an $\epsilon$-ball around the original input.

Classical white-box methods rely on gradients of the classification loss $\mathcal{L}_{\mathrm{cls}}$ with respect to the input $\mathbf{X}$. FGSM, iFGSM, and PGD remain standard first-order attack baselines.

\subsection{Time series classification}

Time series classification (TSC) is a central task in time series analysis~\cite{yang2006challenging} and appears in many practical settings, including finance and healthcare.

Standardized benchmark suites such as the UCR archive \cite{Dau2019_UCR} and the multivariate UEA archive \cite{bagnall2018ueamultivariatetimeseries} have enabled reliable comparisons between TSC methods. Following this benchmarking landscape, we selected three diverse datasets for univariate TSC.

Modern TSC employs a wide range of architectures, with no universal winner across datasets. Convolutional models such as ResCNN \cite{xiaowu2019rescnn} use residual connections \cite{he2016deep} to capture multi-scale patterns. Recurrent architectures including LSTMs \cite{10.1162/neco.1997.9.8.1735} and attention-based variants such as RNNAttention \cite{tsai} model temporal dependencies. More recently, Transformer-based models such as PatchTST \cite{nie2022time} and state-space approaches such as S4 \cite{gu2021efficiently} have shown strong performance on long sequences.

Before the rise of deep learning \cite{mohammadi2024deep}, TSC was dominated by methods based on Dynamic Time Warping \cite{keogh2005exact} and feature-based ensembles such as COTE \cite{bagnall2015time} and Weasel \cite{schafer2017fast}. Hybrid ensembles remain competitive: HIVE-COTE 2.0 (HC2) \cite{middlehurst2021hive,middlehurst2024correction} combines multiple complementary components and remains a strong non-deep baseline.






\subsection{Adversarial Attacks for Sequential Data}

While adversarial attacks are well studied in computer vision, they remain comparatively less explored in time series classification (TSC) despite the growing adoption of deep learning models \cite{Ding_Zhang_Feng_Huang_Jiang_Yang_2023}.

A common starting point is gradient-based white-box attacks, which expose model vulnerabilities across domains \cite{akhtar2018threat, tramer2017ensemble}. In the time-series context, Fawaz \cite{ifgsm} shows that straightforward iterative methods such as iFGSM and BIM can be effective, motivating their use as baseline attacks.

However, sequential data introduce constraints that are less prominent in images or text: temporal dependence, smoothness, and application-specific feasibility requirements. To address these issues, recent work \cite{SmoothPerturbations2022} adapts iterative attacks to time series and evaluates imperceptibility with criteria beyond simple norm bounds, including smoothness and bounded variation.

Several attacks have been designed specifically for time series. Adversarial transformation networks generate adversarial examples with a dedicated neural generator \cite{karim2020adversarial}, while subsequent work extends this approach to multivariate sequences \cite{harford2020adversarial}. Despite these advances, designing temporally consistent and transferable perturbations remains an active research direction.

\subsection{Surrogate Models and Transferability}

When gradients are unavailable, adversarial examples can be constructed using surrogate models. This approach relies on the transferability phenomenon \cite{Papernot2017} : models trained for the same task can exhibit similar gradient directions for increasing the loss function. Consequently, adversarial examples crafted for one model often transfer to other models trained to solve related tasks.

The effectiveness of such attacks depends on the alignment between the loss landscapes of surrogate and target models.
Accordingly, a surrogate model should be trained on the same training set and, ideally, use a similar architecture, hyperparameters, and other design choices.

Our approach differs in that we train a dedicated generative model to directly produce transferable perturbations for time series~data.

\section{Methodology}

\subsection{Problem statement}




\textbf{Time Series Classification.} 
Let $f(\vecX)$ denote a target classifier operating on 
time series $\vecX$. Here, the inference output $f(\vecX)$ is the predicted binary class label $y$ for the input time series $\vecX$. The dataset is given by 
$\mathcal{D} = \{(\vecX_i, y_i)\}_{i = 1}^n$,
where $\vecX_i \in \R^d$ is an input sequence and $y_i \in \mathcal Y$ is the corresponding class label.
The model $f$ is trained by minimizing cross-entropy loss~$
\mathcal{L}_{\mathrm{cls}}(f(\vecX), y)$. 

\textbf{Adversarial Attacks.} An adversarial attack constructs a perturbed example $\vecX'$ by adding perturbations $\boldsymbol{\delta}$ to the original data object
$$\vecX' = \vecX + \vecDelta,$$
where the perturbation $\vecDelta$ is constrained by $$
    \|\vecDelta\|_p \leq \epsilon
$$
and is optimized to induce incorrect or degraded predictions of the model $f$.
The problem is considered in a black-box scenario: we assume that the attacker has no access to the model $f$ or its weights and therefore cannot compute gradients with respect to the attacked model. The alternative white-box scenario, where the attacker has full access to the target model, is less realistic and is commonly used in classical gradient-based attacks such as FGSM. 

Another requirement is computational efficiency. The attack should be fast enough at inference time to be faster than, or at least comparable to, the classical attack methods considered in this work. This is necessary for the proposed attack to be practically applicable.

\subsection{Gradient-Based Baselines}
As a baseline, let's consider classical attacks based on explicit
gradients of the loss function of the target classifier \cite{ifgsm}.

\paragraph{Fast Gradient Sign Method (FGSM)}
Given a constraint $\|\boldsymbol{\delta}\|_\infty \leq \epsilon$, FGSM perturbs the input by taking a single step in the direction of the gradient sign:
\[
\vecX' = \vecX + \epsilon \, \mathrm{sign}\left(
\nabla_{\vecX} \mathcal{L}_{\mathrm{cls}}(\vecX, y)
\right).
\]
where $\vecX$ and $\vecX'$ denote the original and the attacked time series, respectively.

\paragraph{Iterative FGSM (iFGSM)}
A multi-step version refines the perturbation over $T$ iterations with step size $\epsilon /T\leq \alpha\leq \epsilon$, keeping the cumulative distortion within an $\ell_\infty$-ball of radius~$\epsilon$:

\[
\vecX^0 = \vecX, \quad
\vecX^{(t+1)} =
\operatorname{Clip}_{(\vecX, \epsilon)}
\left(
\vecX^{(t)} +
\alpha \,
\mathrm{sign}\left(
\nabla_{\vecX} \mathcal{L}_{\mathrm{cls}}(\vecX^{(t)}, y)
\right)
\right),
\]
where 
\[
\operatorname{Clip}_{(\vecX, \epsilon)} (\vecX') \coloneqq \min\{ \max\{\vecX' , \vecX - \epsilon\},  \vecX + \epsilon\}
\]

\textbf{Problems with gradient attacks}

Gradient-based attacks such as iFGSM or another strong attack PGD \cite{dong2022momentum} are inefficient because they perform computationally expensive backpropagation $T$ times over $T$ iterations.
This cost grows even more quickly when the model is large, when the attack must satisfy structured constraints, or when many candidate perturbations must be generated to achieve desired quality \cite{madry2018towards, goodfellow2015explaining, dong2022momentum}.

In contrast, a generative attack after training is expected to produce strong attacks in a single forward pass, making inference much faster and more scalable \cite{xie2020enabling, xiao2018generating}. 

\[
\vecX' = \vecX + a(G(\vecX))
\]
where $G(\cdot)$ - generative model for adversarial perturbations, $a(\cdot)$ - post-processing function.
This efficiency becomes especially important for iterative or multi-start attack strategies, where the attack procedure has to be repeated many times, for example, under different initial perturbations, random restarts, or candidate search procedures. In such settings, even a moderate per-iteration cost can make the overall attack prohibitively expensive. At the same time, a trained generator can amortize this cost and produce candidate perturbations much more efficiently.

Moreover, fine-tuning a generative model enables the incorporation of specific properties, such as learning perturbations that better respect domain structure, for example, temporal smoothness in time series \cite{su2026temporally, wang2024temporal}.

\subsection{GENADA}

\paragraph{Generative attacks (GENADA)}
Our goal is to develop a generative adversarial attack $h$ that manipulates the target model's predictions. To achieve this, our method employs an additional trainable attacking model $G: \mathbb{R}^d \rightarrow \mathbb{R}^d$ to generate adversarial perturbations. More formally, $\boldsymbol{\delta} = h(\vecX) = a(G(\vecX))$, where $a$ is a post-processing of the model outputs. In Section \ref{sec:gen_adv_attacks}, we provide a more detailed description of the generative attack and training procedures for generative models.

\subsubsection{GENADA}
\label{sec:gen_adv_attacks}

The main idea of a generative adversarial attack is to perturbations $\boldsymbol{\delta}$ not by using gradients of the target model, access to which may be limited, but by using a separate attacking model $G$. 
In this work, we propose two types of generative adversarial attacks: the generative adversarial attack $h_{G}$ and the iterative generative adversarial attack $h_{G_{\mathrm{iter}}}$.

In the $h_{G}$ attack, the attacking model takes the features as input and generates
an $\widetilde{\boldsymbol{\delta}}$ perturbation vector of the same dimension as the original data. However, such a vector cannot be directly used as a perturbation,
since its range is unlimited. To preserve the amplitude of the attack, we
use the regularization mechanism $a(\widetilde{\boldsymbol{\delta}})$: the output of the model $G(\vecX)$ is placed in
a non-linear function that limits the noise range. In our case, at the learning stage, we apply the differentiable hyperbolic tangent $\tanh(\cdot)$, which guarantees $\ell_\infty$-boundedness. Then multiply the noise by
the $\epsilon$ attack force, as in FGSM.

We use a generator network $G$ that produces perturbations~\cite{Baluja2018}:
\[
\boldsymbol{\delta} = \epsilon \, \tanh(G(\vecX)).
\]


\begin{figure*}[!h]
  \centering
  \includegraphics[width=0.7\textwidth]{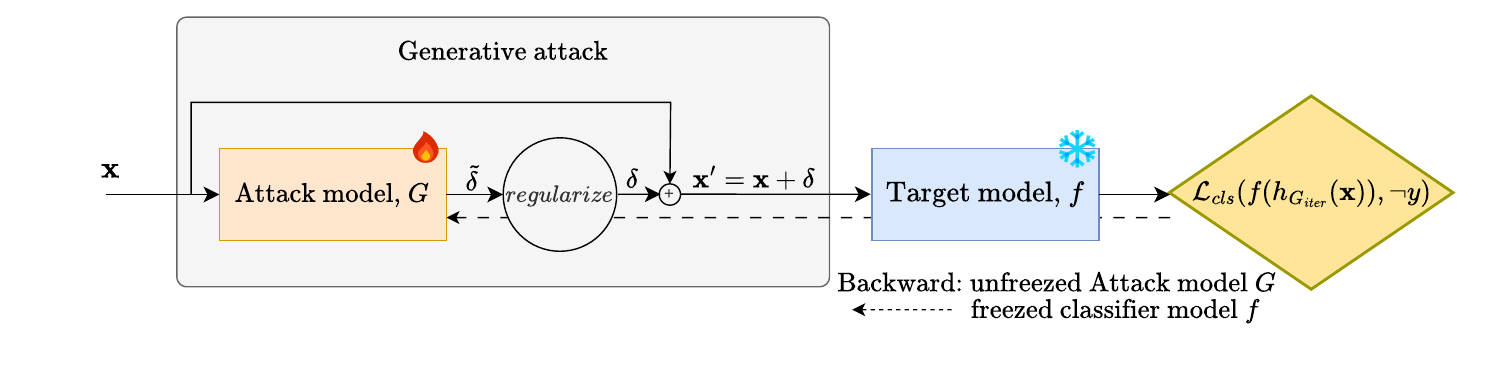}
  \caption{GENADA pipeline}
  \label{fig:gen_attack}
\end{figure*}

The generator is trained to minimize the classification loss with the inverted binary label of a frozen target model for its parameters $\boldsymbol{\theta}_G$:
\[
\mathcal{L}_{\mathrm{cls}}\big(f(\vecX + \epsilon \tanh (G_{\boldsymbol{\theta}_G}(\vecX)), \neg y\big) \rightarrow_{\boldsymbol{\theta}_G} \min.
\]

The backpropagation algorithm and gradient \textit{descent} with respect to the parameters of the generator model $G$ were used.

A detailed attack scheme is shown in Figure~\ref{fig:gen_attack}.

During inference, when evaluating attack metrics with a trained generator $G$, the regularization function $a$ was replaced with $\operatorname{sign}(\cdot)$. This ensures that the $\ell_\infty$ norm of the attack $\boldsymbol{\delta}$ is scaled exactly to $\epsilon$, whereas $\tanh(\cdot)$ merely constrains the attack within an $\ell_\infty$ bound. During training, we opted for $\tanh(\cdot)$ because, unlike $\operatorname{sign}(\cdot)$, it is smooth, which allows gradients to flow during training of the generator.

\subsection{Iterative GENADA training}

Analogous to iFGSM, we propose an iterative version in which perturbations are applied over $T$ steps. The attack noise generation step at each iteration corresponds to $\boldsymbol{\delta} / T$ so that the resulting attack conforms to $ \boldsymbol{\delta}$, while maintaining the total $\ell_\infty$ constraint.

Here, similarly, during training we used $\tanh(\cdot)$ as the regularization function $a(\cdot)$ to improve gradient flow, while at inference with the trained generator we applied $sign(\cdot)$ to increase the attack strength in terms of the $\ell_\infty$ norm.

To reduce memory consumption during training, we avoid full backpropagation in all $T$ iterations. Instead, gradients are computed only for the last $k$ steps, whereas the first $T-k$ steps are kept frozen. In addition, we propose a new training strategy in which the model is first trained in the $k$-step regime, and the number of trainable steps is gradually increased across epochs until it reaches $T$. This allows the generator to first learn a low-iteration attack and then extend it to the full multi-step setting, thereby accelerating training and reducing memory overhead. An example of a training scheme is shown in Figure ~\ref{fig:iter_attack_staregy}.

\begin{figure}[!h]
  \centering
  \includegraphics[width=0.9\columnwidth]{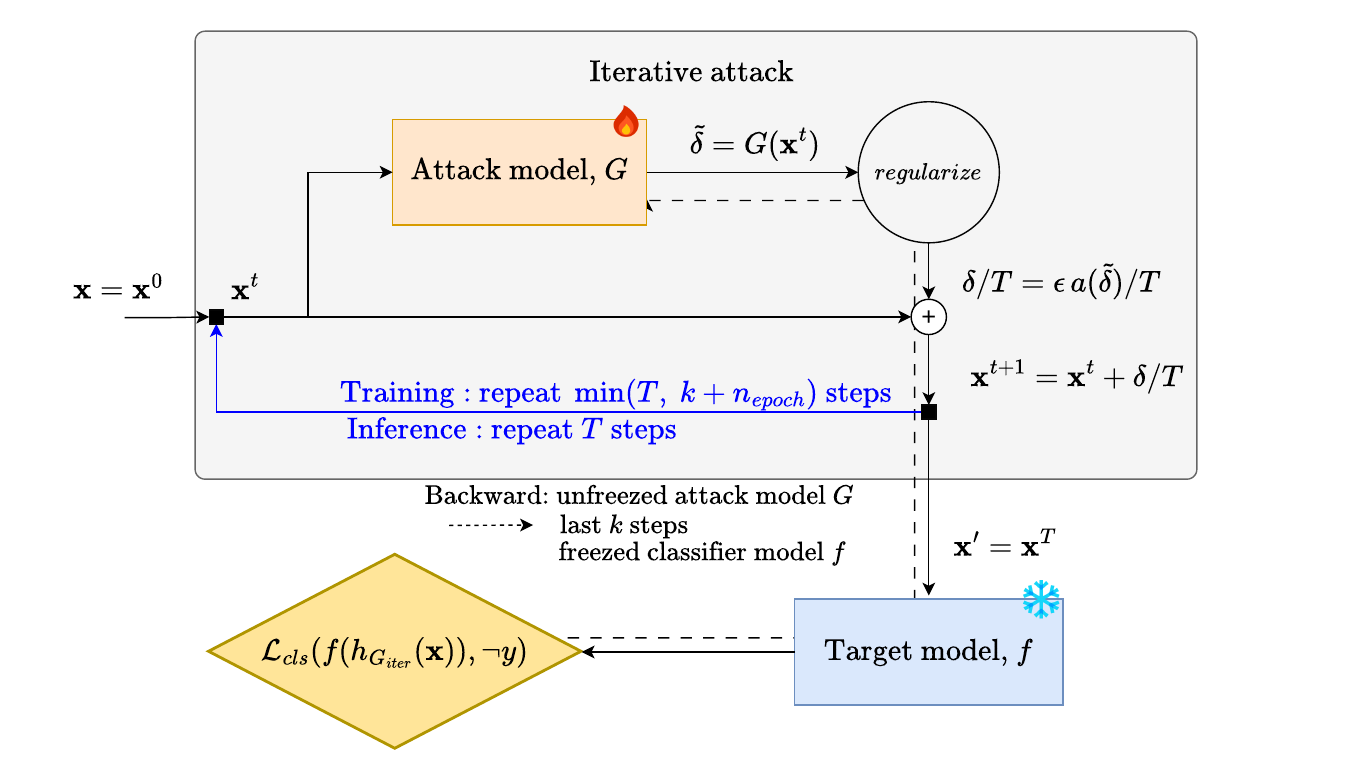}
  \caption{Iterative GENADA pipeline}
  \label{fig:iter_attack}
\end{figure}

The principle of an iterative attack is to attack data sequentially $T$ times, see Figure~\ref{fig:iter_attack} and Algorithm~\ref{alg:iter_genada}.
Similar to an iterative version of FGSM, we balance the performance drop after the attack and the computational cost by selecting~$T$.

\begin{figure}[!h]
  \centering
  \includegraphics[width=0.3\textwidth]{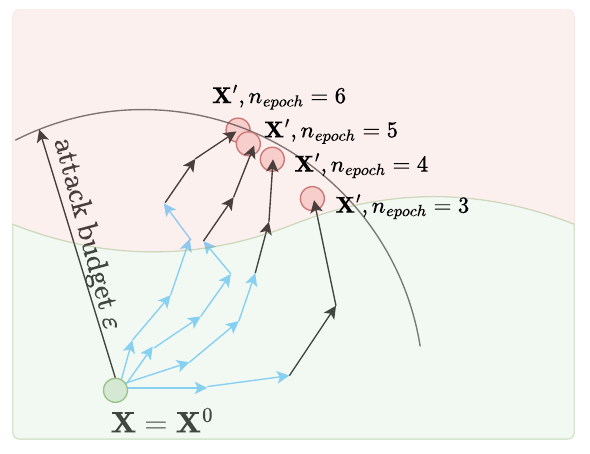}
  \caption{Example of an iterative attack during training with $T = 6$ and $k = 2$ unfrozen iterations at the end of the attack. In the first epoch, the $k = 2$ iterative attack is used; with each subsequent epoch, the iterativity increases by $1$. The \textcolor{myblue}{blue} steps are frozen, the gradient does not flow through them. Here, the classification classes 0 and 1 are indicated in red and green, respectively.}
  \label{fig:iter_attack_staregy}
\end{figure}

\begin{algorithm}[t]
\caption{Iterative GENADA inference}
\label{alg:iter_genada}
\begin{algorithmic}[1]
\Require Input time series $\vecX$, label $y$, frozen classifier $f$, generator $G$, attack budget $\epsilon$, number of steps $T$ \Ensure $\vecX^{(T)}$ \Comment{adversarial example}

\State $\vecX^{(0)} \gets \vecX$ \Comment{initialize the iterative process from the original input}

\For{$t = 0, 1, \dots, T-1$} \Comment{perform $T$ refinement steps}
    \State $\widetilde{\boldsymbol{\delta}}^{(t)} \gets G(\vecX^{(t)})$ \Comment{the generator predicts a perturbation}
    
    \State $\boldsymbol{\delta}^{(t)} \gets \frac{\epsilon}{T}  a \left(\widetilde{\boldsymbol{\delta}}^{(t)}\right)$ \Comment{scale the perturbation to control the total strength}
    
    \State $\vecX^{(t+1)} \gets \vecX^{(t)} + \boldsymbol{\delta}^{(t)}$ \Comment{update the adversarial example by adding the generated perturbation}
    
    \State $\vecX^{(t+1)} \gets \operatorname{Clip}_{(\vecX,\epsilon)}\!\left(\vecX^{(t+1)}\right)$ \Comment{project $\vecX^{(t+1)}$ into the allowed $\ell_\infty$-ball around $\vecX$}
\EndFor

\State \Return $\vecX^{(T)}$ \Comment{output the final adversarial example}
\end{algorithmic}
\end{algorithm}

\subsection{Theoretical Problem Setup}

Let \(x \in \mathbb{R}^d\) be an input and \(y \in \mathcal Y\) its label.
Let \(C:\mathbb{R}^d \to \mathbb{R}^K\) be a fixed classifier and
\(\ell:\mathbb{R}^K \times \mathcal{Y} \to \mathbb{R}\) a loss function.

\[
f_x(\delta) := \ell(C(x+\delta), y),
\qquad \|\delta\|\le \varepsilon .
\]

We consider the maximization problem
\[
    f_x(\delta) \rightarrow \max 
    \quad \text{s.t. } \|\delta\| \leq \epsilon.
\]

\[
\delta_x^\star = \arg\max_{\|\delta\|\le \varepsilon} f_x(\delta).
\]

Let the generator be a parametric mapping
\[
G_\theta : \mathbb{R}^d \to \mathbb{R}^d,
\]
and we train it by \textbf{maximizing} the objective
\[
J(\theta) := \mathbb{E}_{x \sim \mathcal{D}} \, f_x(G_\theta(x)).
\]

This setup formalizes GENADA's motivation: instead of solving a separate gradient-based maximization problem for each input $x$, we learn a generator $G_\theta(x)$ that directly produces perturbations. Under the assumptions stated in Appendix~\ref{sec:theor_justification}, the theorem shows that the quality gap is controlled by the generator approximation error and the remaining optimization error.

The theoretical justification and proof are presented in the Appendix \ref{sec:theor_justification}.

\subsection{Distillation training}
\label{sec:distilled_genada}

\paragraph{Motivation.}
The GENADA generator from Section~\ref{sec:gen_adv_attacks} requires white-box access to the target classifier $f$: every training step back-propagates $\mathcal{L}_{\mathrm{cls}}$ through $f$. In many practical settings the gradients of $f$ are unavailable, while a limited offline budget of expensive gradient-based queries (e.g., iFGSM) can still be afforded. We exploit this asymmetry by \emph{distilling} an iterative teacher attack into a single-forward generative student. After a one-off offline rollout of the teacher, the student is trained without any further access to $\nabla f$ and at inference produces an attack in one forward pass through a small generator. We refer to this procedure as a \emph{distilled attack}.

\paragraph{Teacher.}
The teacher attack $h^{\mathrm{T}}$ is iFGSM with $T^{\mathrm{T}}$ steps and an $\ell_\infty$-budget $\epsilon^{\mathrm{T}}$:
\[
\vecX^{(0)} = \vecX, \qquad
\vecX^{(t+1)} =
\operatorname{Clip}_{(\vecX,\epsilon^{\mathrm{T}})}\!\Big(
\vecX^{(t)} + \tfrac{\epsilon^{\mathrm{T}}}{T^{\mathrm{T}}}\,
\mathrm{sign}\!\big(\nabla_{\vecX}\mathcal{L}_{\mathrm{cls}}(f(\vecX^{(t)}), y)\big)
\Big).
\]
Given a training set $\mathcal{D} = \{(\vecX_i, y_i)\}_{i=1}^n$, we pre-compute and cache the teacher perturbations once:
\[
\boldsymbol{\delta}^{\mathrm{T}}_i \;\coloneqq\; \vecX^{(T^{\mathrm{T}})}_i - \vecX_i,
\qquad i = 1,\dots,n.
\]
All subsequent updates of the student rely on this cache and never query $\nabla f$ again. Note that with $T^{\mathrm{T}} > 1$ the entries of $\boldsymbol{\delta}^{\mathrm{T}}_i$ take multi-level values in $[-\epsilon^{\mathrm{T}}, +\epsilon^{\mathrm{T}}]$, not just the corners $\{\pm\epsilon^{\mathrm{T}}\}$ — a property that influences the choice of post-processing for the student.

\paragraph{Student.}
The student reuses the GENADA generator architecture $G_{\boldsymbol{\theta}_G}$ of Section~\ref{sec:gen_adv_attacks}, but applies the smooth post-processing $\tanh(\cdot)$ at \emph{both} training and inference:
\[
\boldsymbol{\delta}^{\mathrm{S}}(\vecX) \;=\; \epsilon^{\mathrm{S}}\,
\tanh\!\big(G_{\boldsymbol{\theta}_G}(\vecX)\big).
\]
For an iterative student with $T^{\mathrm{S}} \geq 1$ steps the construction is fully analogous to Algorithm~\ref{alg:iter_genada}:
\[
\vecX^{(0)} = \vecX, \qquad
\vecX^{(t+1)} = \vecX^{(t)} + \tfrac{\epsilon^{\mathrm{S}}}{T^{\mathrm{S}}}\,
\tanh\!\big(G_{\boldsymbol{\theta}_G}(\vecX^{(t)})\big).
\]
Because each step contributes at most $\epsilon^{\mathrm{S}}/T^{\mathrm{S}}$ in $\ell_\infty$ and $\tanh(\cdot)$ is bounded by~$1$, the cumulative perturbation satisfies $\|\boldsymbol{\delta}^{\mathrm{S}}\|_\infty \le \epsilon^{\mathrm{S}}$ without explicit clipping. Throughout this paper we use $T^{\mathrm{S}} = 1$ unless stated otherwise, so the distilled attack costs a single forward pass at inference.

The asymmetry with the white-box GENADA baseline (which uses $\mathrm{sign}(\cdot)$ at inference) is deliberate. The distillation target $\boldsymbol{\delta}^{\mathrm{T}}$ is produced by an \emph{iterative} attack and lives in $[-\epsilon^{\mathrm{T}}, +\epsilon^{\mathrm{T}}]^{d}$ with intermediate amplitudes, not only at the corners. A student quantized by $\mathrm{sign}(\cdot)$ at inference can only output the two extreme values and is therefore structurally unable to reproduce intermediate amplitudes of the teacher, which raises the irreducible distillation MSE. Keeping the smooth $\tanh(\cdot)$ at inference resolves this mismatch.

\paragraph{Distillation loss.}
The student is trained to mimic the cached teacher perturbations under a mean-squared error with an optional amplitude regularizer:
\[
\mathcal{L}_{\mathrm{dist}}(\boldsymbol{\theta}_G)
=
\underbrace{\mathbb{E}_{\vecX\sim\mathcal{D}}\,
\big\|\boldsymbol{\delta}^{\mathrm{S}}(\vecX) - \boldsymbol{\delta}^{\mathrm{T}}(\vecX)\big\|_2^2}_{\text{teacher matching}}
 + \alpha\,
\underbrace{\mathbb{E}_{\vecX\sim\mathcal{D}}\,
\big\|\boldsymbol{\delta}^{\mathrm{S}}(\vecX)\big\|_2^2}_{\text{amplitude penalty}},
\]
where $\alpha \ge 0$ controls how strongly the student is biased toward small perturbations, $\mathcal{D}$ is the original train dataset of non-attacked samples. The gradient $\nabla_{\boldsymbol{\theta}_G}\mathcal{L}_{\mathrm{dist}}$ flows through $G_{\boldsymbol{\theta}_G}$ and the smooth $\tanh(\cdot)$ only — the target $f$ is never queried at training~time.

\paragraph{Three $\ell_\infty$ budgets.}
We distinguish three budgets and treat them as separate knobs:
$\epsilon^{\mathrm{T}}$ — the teacher's $\ell_\infty$-budget that bounds the cached attack;
$\epsilon^{\mathrm{S}}_{\mathrm{train}}$ — the $\epsilon$ used inside $\boldsymbol{\delta}^{\mathrm{S}}$ when computing $\mathcal{L}_{\mathrm{dist}}$;
$\epsilon^{\mathrm{S}}_{\mathrm{eval}}$ — the $\epsilon$ applied to the trained student at inference.
We always set $\epsilon^{\mathrm{S}}_{\mathrm{eval}} = \epsilon^{\mathrm{T}}$ so that teacher and student are evaluated under the same constraint. The choice of $\epsilon^{\mathrm{S}}_{\mathrm{train}}$, by contrast, controls how strongly the post-processing $\tanh(\cdot)$ is driven toward saturation during training: a small $\epsilon^{\mathrm{S}}_{\mathrm{train}}$ pushes $G_{\boldsymbol{\theta}_G}(\vecX)$ to large magnitudes (saturated $\tanh$, sharp signs), while a large $\epsilon^{\mathrm{S}}_{\mathrm{train}}$ keeps $G_{\boldsymbol{\theta}_G}(\vecX)$ in the near-linear regime of $\tanh$ (soft, low-magnitude $\boldsymbol{\delta}^{\mathrm{S}}$).

\paragraph{Pipeline.}
The full procedure is summarized in Algorithm~\ref{alg:distilled_genada}. The teacher is queried only in line~1 to populate $\{\boldsymbol{\delta}^{\mathrm{T}}_i\}_{i=1}^n$; thereafter back-propagation runs only through $G_{\boldsymbol{\theta}_G}$ and the smooth $\tanh(\cdot)$, so the per-epoch training cost is $\mathcal{O}(n \cdot c_{G})$, where $c_{G}$ is the per-sample cost of one forward+backward pass through the generator — independent of the size and depth of $f$.

\begin{algorithm}[t]
\caption{Distilled attack}
\label{alg:distilled_genada}
\begin{algorithmic}[1]
\Require Dataset $\mathcal{D}=\{(\vecX_i, y_i)\}_{i=1}^n$; frozen classifier $f$; teacher attack $h^{\mathrm{T}}$ with budget $\epsilon^{\mathrm{T}}$ and $T^{\mathrm{T}}$ steps; generator $G_{\boldsymbol{\theta}_G}$; student budgets $\epsilon^{\mathrm{S}}_{\mathrm{train}}$, $\epsilon^{\mathrm{S}}_{\mathrm{eval}}$; student steps $T^{\mathrm{S}}$; regularization weight $\alpha$; number of distillation epochs $E$; learning rate $\eta$.
\Ensure Trained generator $G_{\boldsymbol{\theta}_G^{\star}}$.

\State \textbf{Teacher rollout:} compute and cache $\boldsymbol{\delta}^{\mathrm{T}}_i \gets h^{\mathrm{T}}(\vecX_i) - \vecX_i$ for all $i = 1,\dots,n$. \Comment{requires $\nabla f$; performed once}

\For{$e = 1,\dots,E$}
  \For{mini-batch $\mathcal{B} \subset \{1,\dots,n\}$}
    \State $\vecX^{(0)}_i \gets \vecX_i$ for $i \in \mathcal{B}$
    \For{$t = 0,\dots,T^{\mathrm{S}}-1$}
      \State $\vecX^{(t+1)}_i \gets \vecX^{(t)}_i + \tfrac{\epsilon^{\mathrm{S}}_{\mathrm{train}}}{T^{\mathrm{S}}}\,
              \tanh\!\big(G_{\boldsymbol{\theta}_G}(\vecX^{(t)}_i)\big)$
    \EndFor
    \State $\boldsymbol{\delta}^{\mathrm{S}}_i \gets \vecX^{(T^{\mathrm{S}})}_i - \vecX_i$
    \State $\mathcal{L} \gets \dfrac{1}{|\mathcal{B}|}\sum_{i \in \mathcal{B}}
            \Big( \big\|\boldsymbol{\delta}^{\mathrm{S}}_i - \boldsymbol{\delta}^{\mathrm{T}}_i\big\|_2^2
                  + \alpha\,\big\|\boldsymbol{\delta}^{\mathrm{S}}_i\big\|_2^2 \Big)$
    \State $\boldsymbol{\theta}_G \gets \boldsymbol{\theta}_G - \eta\,\nabla_{\boldsymbol{\theta}_G}\mathcal{L}$ \Comment{no gradient through $f$}
  \EndFor
\EndFor

\State \textbf{Inference:} for $\vecX$ set $\vecX^{(0)} = \vecX$ and iterate
\State \quad $\vecX^{(t+1)} = \vecX^{(t)} + \tfrac{\epsilon^{\mathrm{S}}_{\mathrm{eval}}}{T^{\mathrm{S}}}\,
       \tanh\!\big(G_{\boldsymbol{\theta}_G^{\star}}(\vecX^{(t)})\big)$ for $t = 0,\dots,T^{\mathrm{S}}-1$.
\State \Return $G_{\boldsymbol{\theta}_G^{\star}}$.
\end{algorithmic}
\end{algorithm}

\paragraph{Remarks.}
The distilled attack has two practical advantages over the white-box training baseline of Section~\ref{sec:gen_adv_attacks}:
\textbf{(i)} after the one-off teacher rollout, no further access to $\nabla f$ is required; we can complete the same step for a black-box attack to collect pairs of attack vectors and perturbed examples, so the procedure is compatible with a setup in which $f$ is exposed only through a fixed offline set of teacher perturbations;
\textbf{(ii)} the per-epoch training cost is independent of the size and depth of $f$, since back-propagation runs only through the much smaller generator $G_{\boldsymbol{\theta}_G}$.
The structural cost is a hard upper bound on attack strength: a perfectly distilled student inherits the fooling rate of the teacher and cannot exceed it. The empirical gap between teacher and student fooling rates — and the architectural choices that close or widen it — is the central object of study in Section~\ref{sec:experiments}.

\section{Experiments}
\label{sec:experiments}

\subsection{Datasets}

We evaluate on three binary classification datasets from the UCR archive~\citep{Dau2019_UCR}: PowerCons ($L=144$, 180/180 train/test), Strawberry ($L=235$, 613/370), and GunPoint ($L=150$, 50/150).
These datasets represent different application domains while sharing the same setting of univariate time-series classification.

\subsection{Model Architectures}

We consider three target classifier families: LSTM~\cite{10.1162/neco.1997.9.8.1735},
ResCNN~\cite{rescnn}, and PatchTST~\cite{nie2022time}.

Generators follow the corresponding backbone architecture and produce perturbations with the same dimensionality as the input series.
In addition, we evaluate RNNA~\cite{tsai} and S4~\cite{gu2021efficiently} as generator architectures.

\subsection{Evaluation Metrics}

In our experiments, we used several evaluation metrics, calculated over a test sample: fooling rate, effectiveness \cite{sokerin2025concealed}, and target accuracy. We also measure the inference time during attack generation.

\textbf{Fooling Rate (FR)} measures the proportion of instances that change their prediction after the attack:
\[
FR = \frac{1}{N} \sum_{i=1}^{N}
\mathbbm{1}\big(f(\mathbf{x}_i) \neq f(\mathbf{x}'_i)\big),
\]
where $N$ is the sample size, $\mathbf{x}_i$ is the original instance, $\mathbf{x}_i'$ is the adversarial instance, and $f(\cdot)$ is the target model.

\textbf{Attack Effectiveness (E)}: characterizes the degradation of the target model's performance. It is defined as the difference between $1$ and the $F_1$-score computed on the perturbed examples $\vecX'$:
\[
E = 1 - F_1(y^{\mathrm{true}}, f(\mathbf{x}')),
\]
where $y^{\mathrm{true}}$ denotes the ground truth class labels.

\textbf{Target Accuracy (TA)}: measures the proportion of instances that remain correctly classified after the attack. It is essentially the standard \textit{accuracy} metric computed on the perturbed examples:
\[
TA = \frac{1}{N} \sum_{i=1}^{N}
\mathbbm{1}(f(\mathbf{x}'_i) = y_i).
\]
where $y_i^{\mathrm{true}}$ is the ground truth label for the $i$-th instance.

\subsection{Experimental setup}

Experiments were run on a single NVIDIA RTX A5000 GPU.
Datasets were normalized prior to training.
We used Adam/AdamW with learning rates in $\{10^{-3},10^{-4}\}$ and batch size 64.
The perturbation budget was set to $\varepsilon\in[0.3,0.4]$.
Target models were trained for up to 100 epochs.
FGSM used a single step, while iFGSM used $T=10$ iterations.
GENADA/iGENADA were trained for up to 200 epochs with number of iterations $T\le 20$. The number of unfrozen attack steps at the end is set to $k=3$. 

\paragraph{Iterative training strategy}.
At epoch $0$, the initial number of training steps is also equal to $k=3$. Then, it increases by one at each epoch until it reaches $T$, and remains fixed afterward. In our experiments, we mainly use $T=15$ or $5$.


The source code is available in the anonymized GitHub repository: \href{https://anonymous.4open.science/r/adversarial-gen-attacks-6690}{https://anonymous.4open.science/r/adversarial-gen-attacks-6690}.

\subsection{Main results}

\begin{figure}
    \centering
    \includegraphics[width=0.8\linewidth]{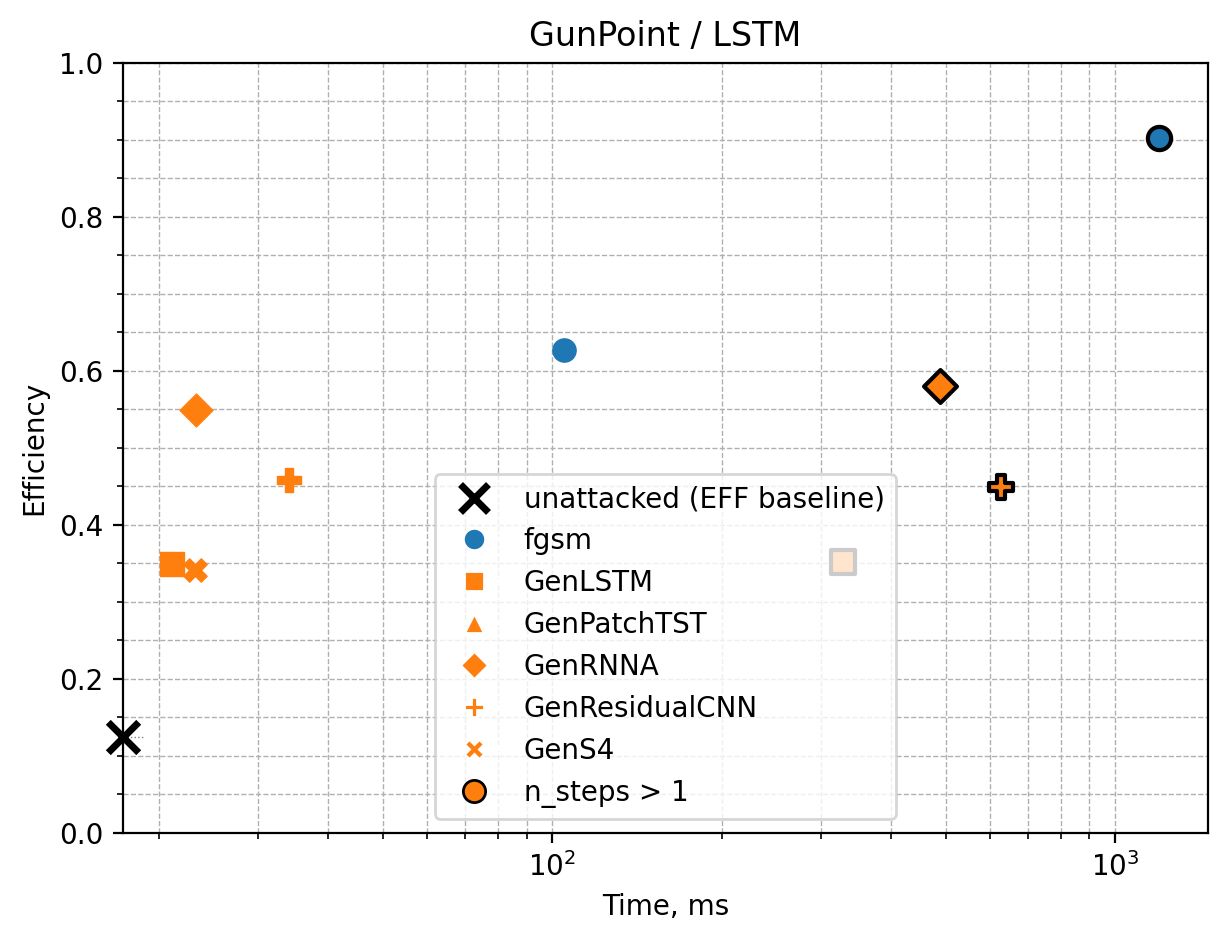}
    \caption{Time vs efficiency scatter plot}
    \label{fig:time_scatter}
\end{figure}

\begin{table*}[t]
\centering
\caption{Best fooling rate (FR) achieved by each attack type across all generators.}
\label{tab:results_fr_best}
\small
\setlength{\tabcolsep}{4pt}
\begin{tabular}{lccccccccc}
\toprule
\multirow{2}{*}{Attack} &
\multicolumn{3}{c}{PowerCons} &
\multicolumn{3}{c}{\cellcolor{verylightgray} GunPoint} &
\multicolumn{3}{c}{Strawberry} \\
& LSTM & PatchTST & ResCNN &
\cellcolor{verylightgray} LSTM &
\cellcolor{verylightgray} PatchTST &
\cellcolor{verylightgray} ResCNN &
LSTM & PatchTST & ResCNN \\
\midrule
FGSM
& \textit{0.361} & 0.382 & 0.382
& 0.533 & 0.876 & 0.064
& \textit{0.824} & 0.629 & 0.227 \\

iFGSM
& 0.113 & \textit{0.674} & \textbf{0.887}
& \textbf{0.820} & \textbf{0.969} & \textbf{0.989}
& \textbf{0.873} & 0.893 & \textit{0.969} \\

GENADA
& 0.185 & \textbf{0.661} & 0.456
& 0.713 & 0.773 & \textit{0.873}
& 0.789 & \textbf{0.951} & \textbf{0.973} \\

iGENADA
& \textbf{0.674} & 0.572 & \textit{0.744}
& \textit{0.773} & \textit{0.907} & 0.773
& 0.757 & \textit{0.9} & 0.53 \\
\bottomrule
\end{tabular}
\end{table*}













\paragraph{Trainability and generalization.}
All attacks reached a dataset- and architecture-specific performance plateau, typically within $10$--$100$ epochs. Transferability experiments show that the learned generators remain effective even when the generator and target classifier architectures differ, supporting their applicability in black-box settings.

\paragraph{Inference-time efficiency}
GENADA and iGENADA achieve the fastest inference while maintaining competitive attack quality (Figure~\ref{fig:time_scatter}). Once trained, the generator produces perturbations using only forward passes and does not require backpropagation through the target classifier.

\paragraph{Distilled attacks.}
Distillation performance is determined by two factors: teacher strength and the amount of training data available to learn the regression target $\boldsymbol{\delta}^{\mathrm{T}}(\mathbf{x})$. On Strawberry, the largest dataset, the best students nearly match the white-box teacher in fooling rate and achieve accuracy degradation comparable to iFGSM. On GunPoint, top students retain most of the teacher's fooling rate, while weaker generators degrade substantially due to the limited training set size. On PowerCons, student performance is constrained either by a weak teacher (LSTM target) or by the small dataset size, which limits distillation quality even when the teacher is stronger.

\paragraph{Generator hierarchy.}
A consistent ranking emerges across attack families. PatchTST is the most reliable backbone, achieving the strongest results under GENADA/iGENADA and remaining competitive under distillation. S4 dominates among distilled generators on larger datasets and is the only architecture that surpasses the iFGSM teacher on two Strawberry targets. RNNA occupies an intermediate position, while LSTM and ResidualCNN are consistently the weakest. Overall, the observed ordering is
$\textbf{PatchTST}, \textbf{S4} \succeq \textbf{RNNA} \succ \textbf{LSTM}, \textbf{ResidualCNN}$.

\paragraph{Artifacts and attack stealthiness.}
FGSM and iFGSM often produce visible high-frequency artifacts (Appendix~\ref{app:distilled_viz}, Figure~\ref{fig:examples}). GENADA can also introduce artifacts, although their magnitude is limited by the attack budget $\varepsilon$. In some cases, generative attacks are visually smoother than iFGSM, while in others the gradient-based attack remains more concealed. Improving attack stealthiness remains an important direction for future work, potentially through discriminator-based objectives~\citep{sokerin2025concealed}.

\section{Conclusion}

In this work, we introduced GENADA (GENerative ADversarial Attack), a generative framework for adversarial attacks on time series classification models. Unlike classical gradient-based approaches such as FGSM or iFGSM, the proposed method learns to generate perturbations directly using a dedicated neural generator. As a result, after the training stage, adversarial examples can be produced using only a forward pass through the generator, without iterative optimization and without repeated gradient computations with respect to the target model during inference. This significantly reduces the computational cost of attack generation and makes the method attractive for practical scenarios where low latency or restricted model access is important.

We proposed a training procedure based on a frozen target classifier, allowing the generator to learn perturbations that maximize the degradation of the classifier's predictions while preserving perturbation constraints. We evaluated the proposed framework on multiple time series classification datasets, including PowerCons, GunPoint, and Strawberry, and considered several model architectures, including recurrent, convolutional, and transformer-based networks. The experiments demonstrate that GENADA achieves attack quality competitive with strong white-box baselines while substantially reducing adversarial example generation time. In particular, the results show that generative attacks can successfully approximate the behavior of iterative optimization-based attacks while avoiding their high inference-time cost. The obtained results also indicate that generative approaches are capable of capturing these structures and producing effective perturbations even in temporally dependent signals.



Overall, the proposed framework demonstrates that learned generative attackers constitute a viable and efficient alternative to classical optimization-based adversarial attacks.

\paragraph{Limitations and future work}
Despite the obtained results, this study has several limitations. First, the proposed attacks are evaluated only in the binary classification setting, where the attack objective is to move the prediction toward the inverted class label. This setting is important as a first step, but multiclass classification is a more general and practically relevant scenario. 

Second, the quality and generalization ability of the learned attacks are still limited by the generator architecture and the training objective. In this work, we considered several neural architectures, including recurrent, convolutional, transformer-based, and state-space models. However, a more systematic analysis of new generator architectures, as well as ensemble-based attacks, may further improve transferability and attack effectiveness across different target classifiers. 





\bibliographystyle{ACM-Reference-Format}
\bibliography{bibliography}


\appendix

\section*{Appendix}

\begin{figure*}[!h]
  \centering

  \begin{subfigure}[t]{0.37\textwidth}
    \centering
  \end{subfigure}\hfill
  \begin{subfigure}[t]{0.37\textwidth}
    \centering
  \end{subfigure}

  \vspace{0.3em}

  \begin{subfigure}[t]{0.37\textwidth}
    \includegraphics[width=\linewidth]{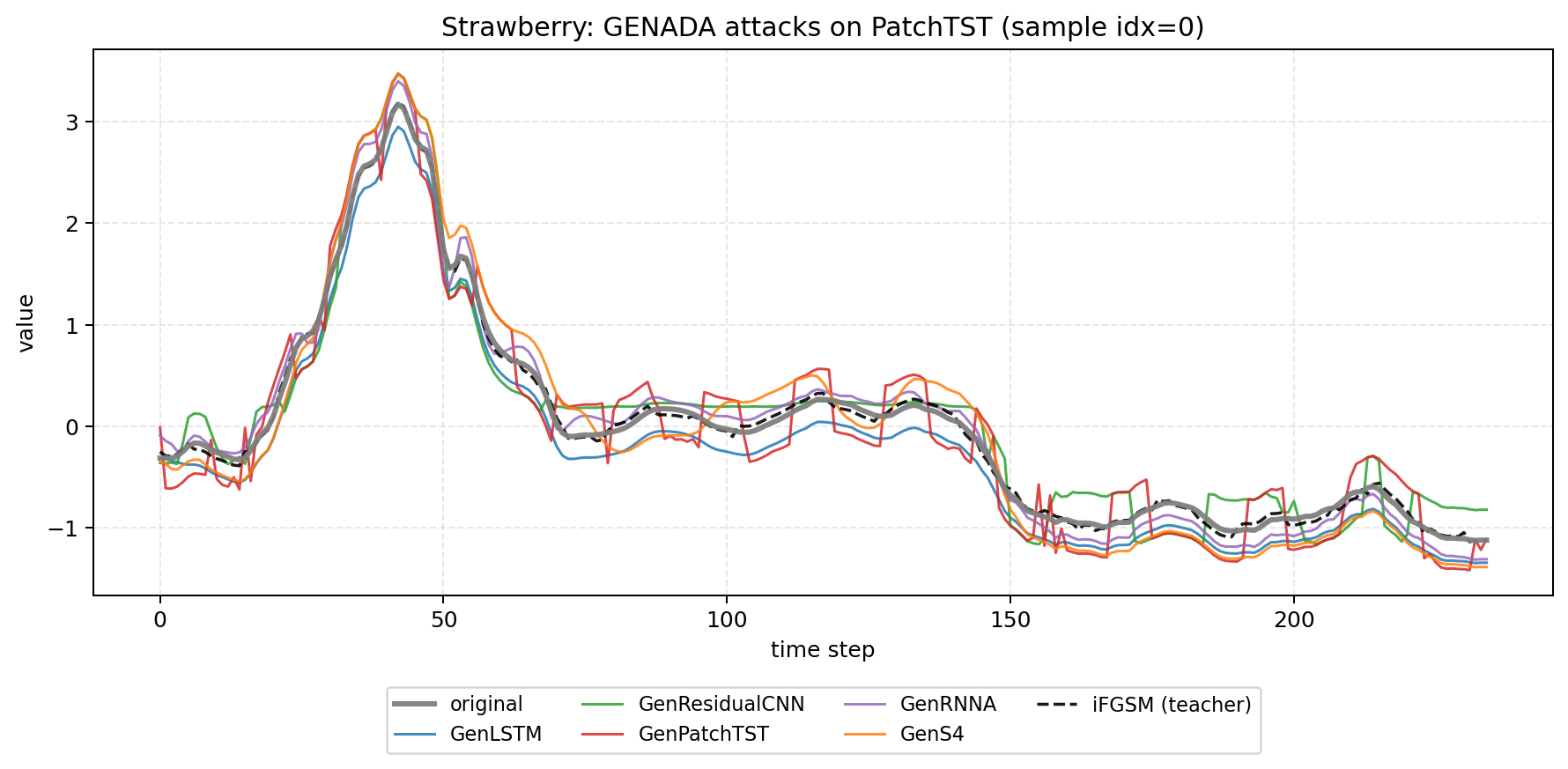}
    \caption{Strawberry — GENADA}
  \end{subfigure}\hfill
  \begin{subfigure}[t]{0.37\textwidth}
    \includegraphics[width=\linewidth]{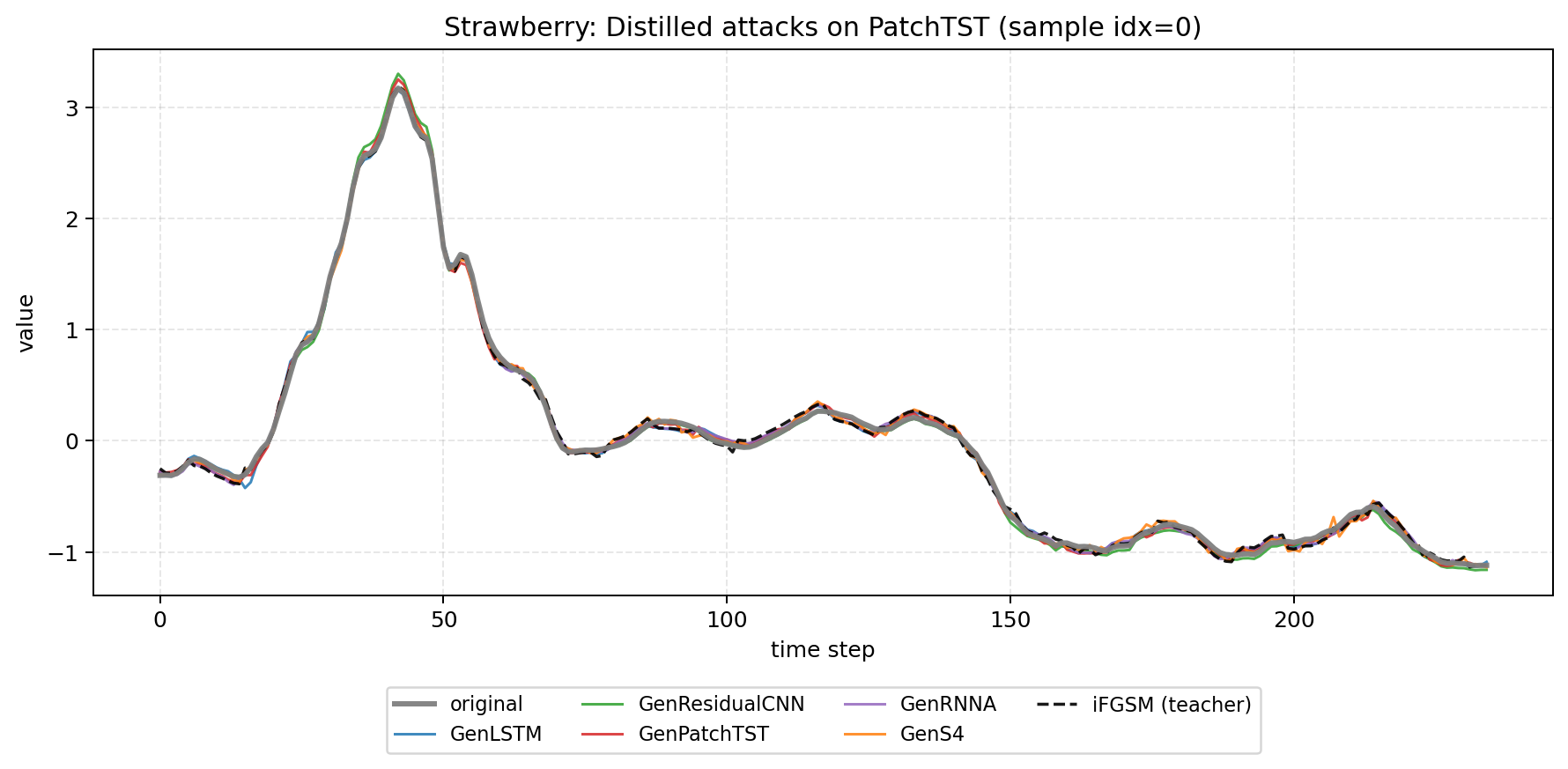}
    \caption{Strawberry — distilled}
  \end{subfigure}

  \vspace{0.5em}

  \begin{subfigure}[t]{0.37\textwidth}
    \includegraphics[width=\linewidth]{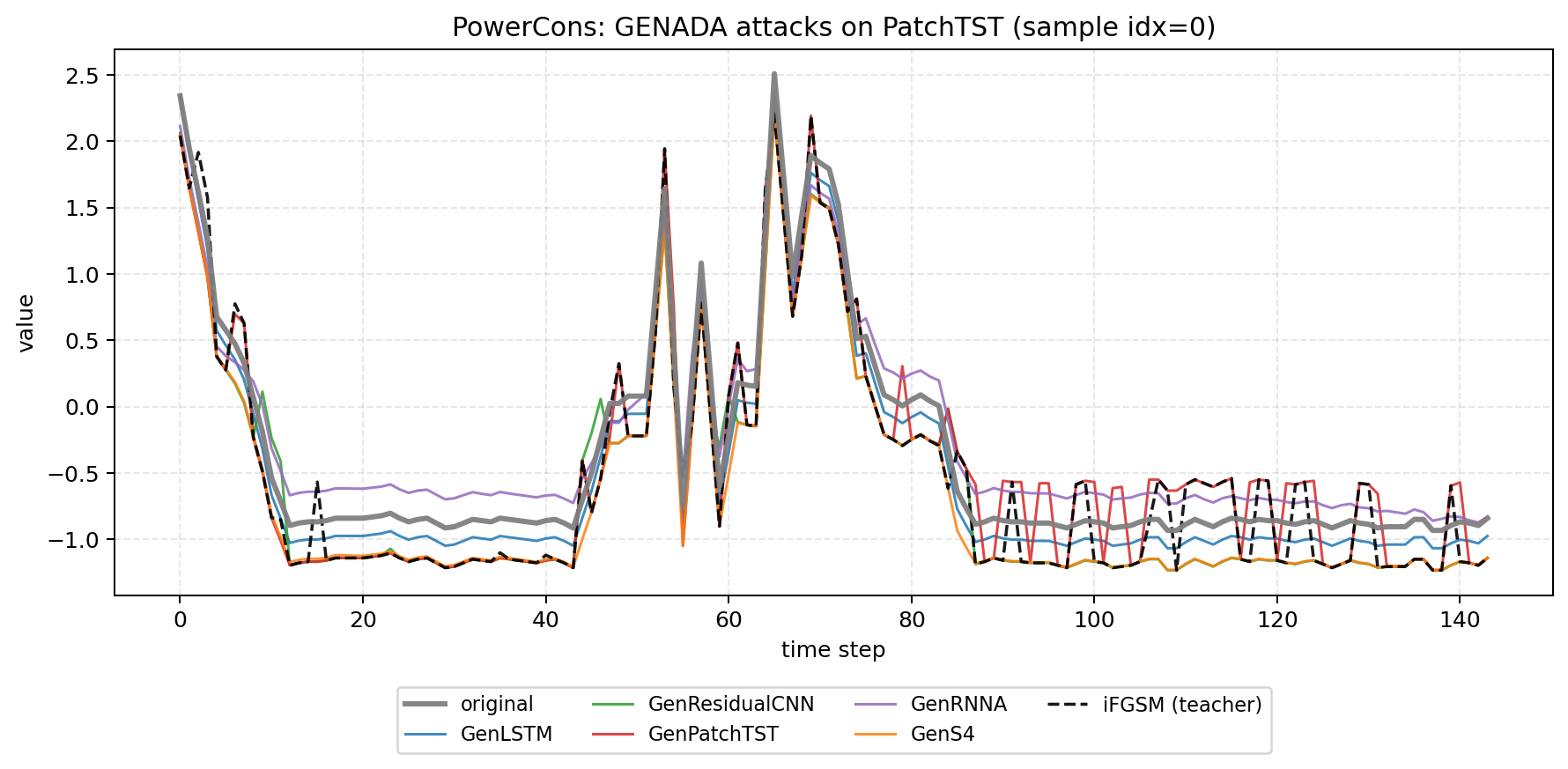}
    \caption{PowerCons — GENADA}
  \end{subfigure}\hfill
  \begin{subfigure}[t]{0.37\textwidth}
    \includegraphics[width=\linewidth]{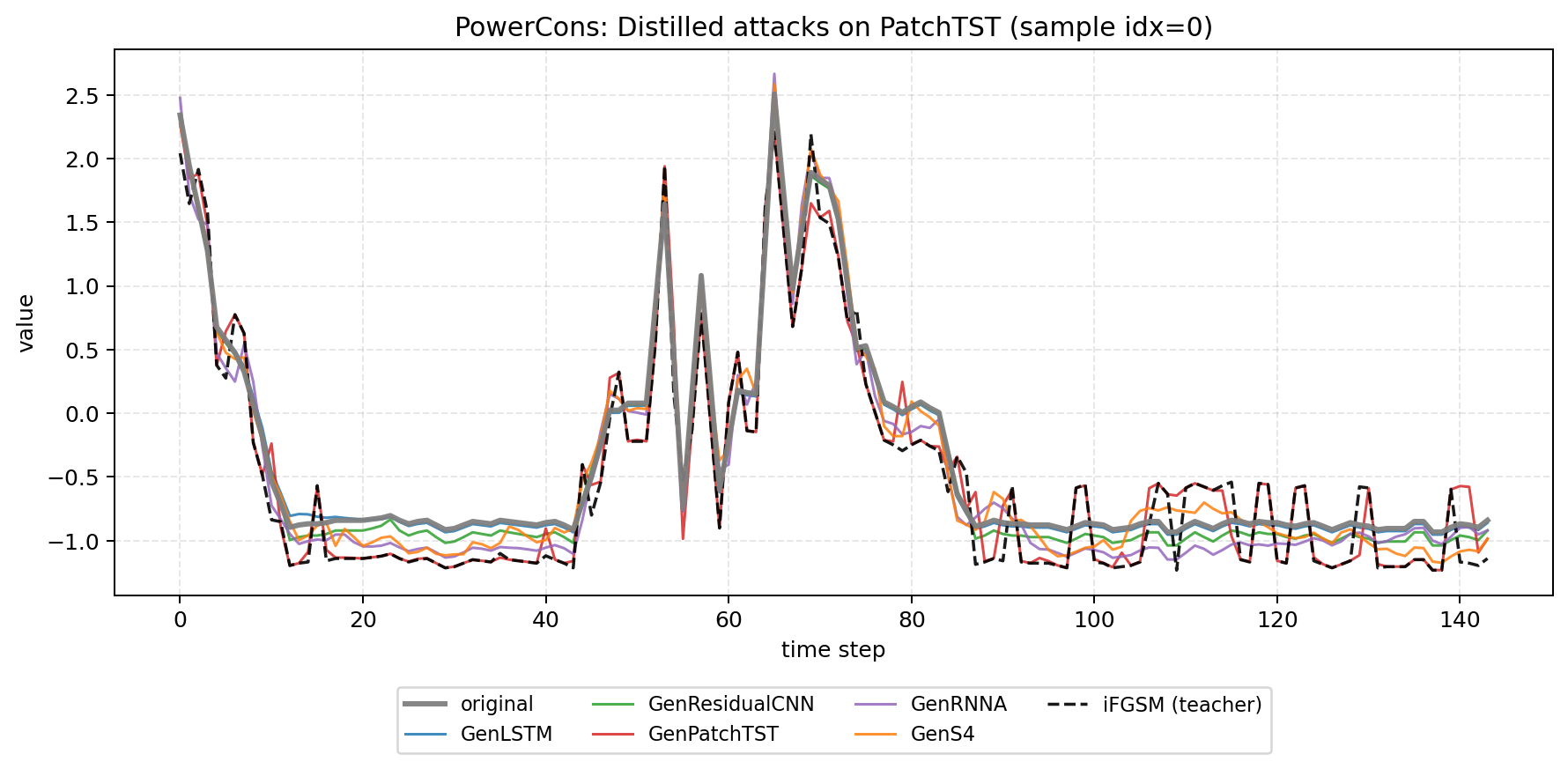}
    \caption{PowerCons — distilled}
  \end{subfigure}

  \vspace{0.5em}

  \begin{subfigure}[t]{0.37\textwidth}
    \includegraphics[width=\linewidth]{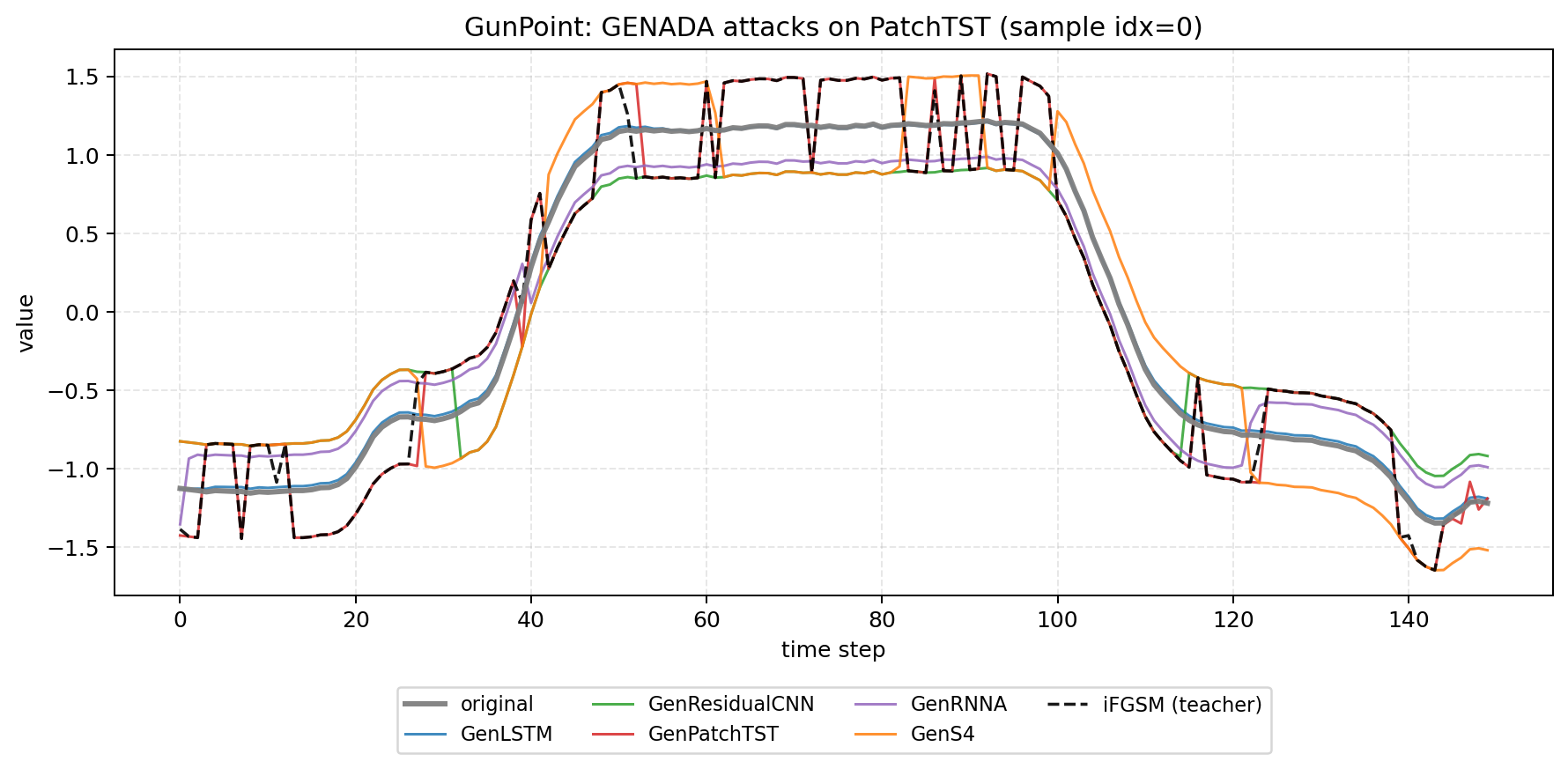}
    \caption{GunPoint — GENADA}
  \end{subfigure}\hfill
  \begin{subfigure}[t]{0.37\textwidth}
    \includegraphics[width=\linewidth]{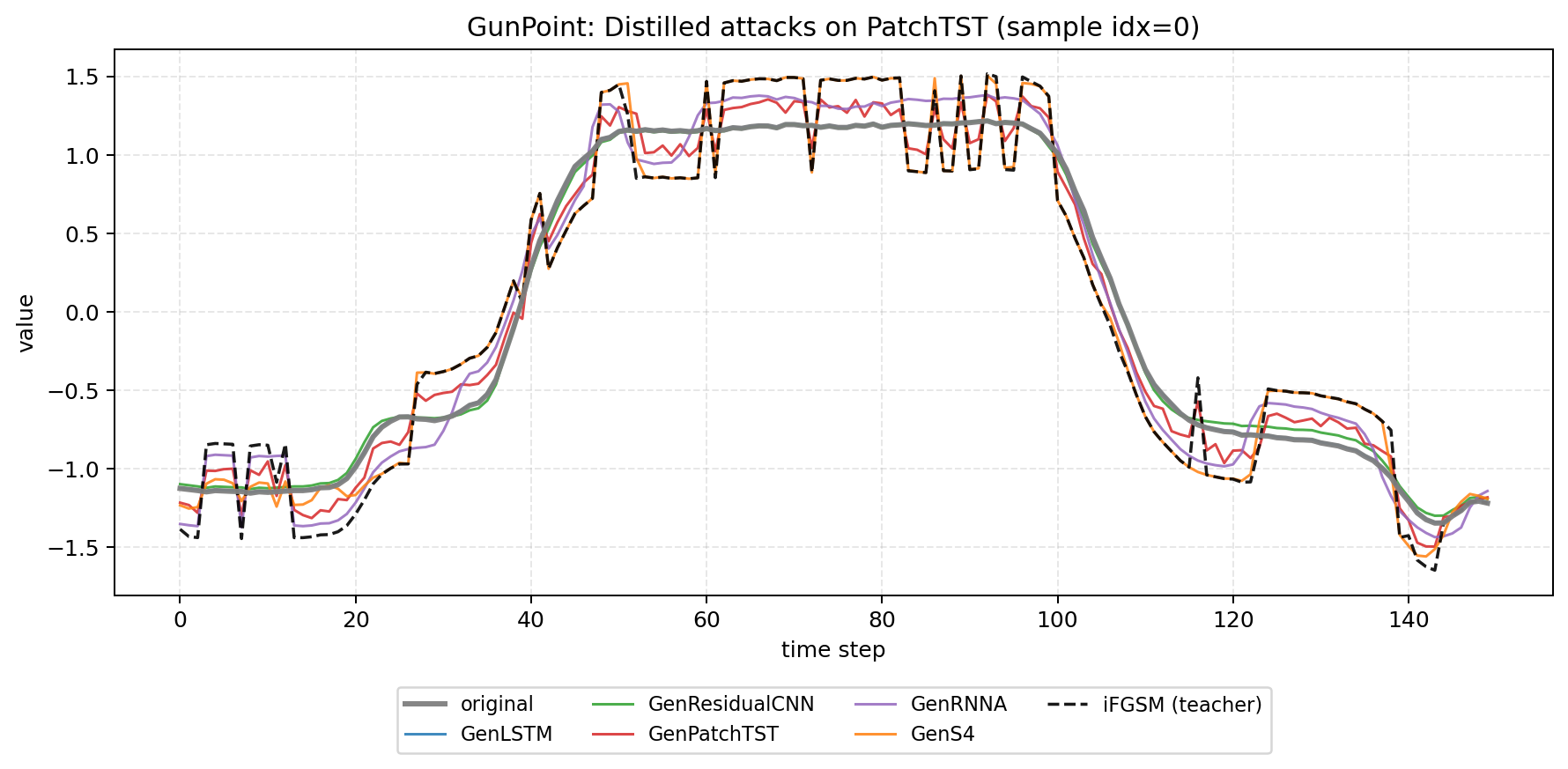}
    \caption{GunPoint — distilled}
  \end{subfigure}

  \caption{Adversarial perturbations on a single validation sample for PatchTST target.
  Left column: white-box GENADA; right column: distilled students.}
  \label{fig:examples}
\end{figure*}


\section{Theoretical justification of convergence}
\label{sec:theor_justification}
\subsection*{Theorem}

\subsubsection*{Assumptions}
The choice of these assumptions is motivated by a balance between mathematical tractability and the typical behavior observed in deep learning and adversarial robust optimization:

We consider three assumptions:
\begin{itemize}
\item[A1] $L$-smoothness of $f_x(\delta)$ with respect to \(\delta\):
\[
\forall x, \delta_1, \delta_2: \|\delta_i\|\le\varepsilon 
\Rightarrow 
\|\nabla f_x(\delta_1) - \nabla f_x(\delta_2)\|
\le
L \|\delta_1-\delta_2\|.
\]

This is a standard regularity assumption in non-convex optimization. In the context of adversarial attacks (like FGSM or PGD), assuming Lipschitz continuity of the loss gradient w.r.t. the perturbation $\delta$ is standard practice for proving convergence bounds. While ReLU activations can make the loss non-smooth at specific points, smoothed variants (like GELU or Softplus) or empirical observation of local smoothness under small $\varepsilon$ justify this assumption in practice.

\item[A2] Uniform approximation capacity of the generator:
\[
\forall \eta>0\; \exists \theta^\star: 
\sup_{x}
\|G_{\theta^\star}(x) - \delta_x^\star\|
\le \eta.
\]

This assumption relies on the Universal Approximation Theorems for neural networks. Since the perturbation space is bounded by $\|\delta\| \le \varepsilon$ and the input domain is typically compact (e.g., images in $[0, 1]^d$), a sufficiently wide or deep generator network $G_\theta$ is theoretically capable of uniformly approximating the optimal perturbation mapping $\delta_x^\star$. Similar expressivity assumptions are routinely used in the analysis of GANs and neural operators.

\item[A3] Polyak--Łojasiewicz (PL) condition for $J(\theta)$:
\[
\exists \mu_\theta>0: \forall \theta 
\Rightarrow 
\frac12 \|\nabla_\theta J(\theta)\|^2
\ge
\mu_\theta \big(J(\theta^\star) - J(\theta)\big).
\]

The PL condition is significantly weaker than strict concavity, as it allows for multiple local maxima and non-convex landscapes, which is exactly the case for neural network objectives $J(\theta)$. Recent deep learning theory literature shows that overparameterized neural networks often satisfy the PL condition locally or globally during gradient descent. Here, it elegantly captures the property that if the gradient of the generator's objective is small ($\varepsilon_{\mathrm{grad}}$), the generator is close to the optimal achievable utility $J(\theta^\star)$.

\end{itemize}

Now we can formulate theorem, that proofs workability of our method.

\begin{theorem}
Suppose the assumptions [A1]-[A3] hold, and let \(\theta_k\) satisfy
\[
\|\nabla_\theta J(\theta_k)\| \le \varepsilon_{\mathrm{grad}}.
\]
Then
\[
\mathbb{E}_{x}
\big[
f_x(\delta_x^\star)
-
f_x(G_{\theta_k}(x))
\big]
\le
\frac{L}{2}\eta^2
+
\frac{\varepsilon_{\mathrm{grad}}^2}{2\mu_\theta}.
\]
\end{theorem}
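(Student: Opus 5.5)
The plan is to split the expected gap into an \emph{approximation} term, handled by [A1]--[A2], and an \emph{optimization} term, handled by [A3].

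Fix \(\eta>0\) and let \(\theta_\eta\) be a parameter supplied by [A2], so that \(\sup_x\|G_{\theta_\eta}(x)-\delta_x^\star\|\le\eta\). Let \(J^\star:=\sup_\theta J(\theta)\), which is the quantity \(J(\theta^\star)\) appearing in [A3]. Note that the \(\theta^\star\) of [A2] and the \(\theta^\star\) of [A3] need not coincide, so I will keep them separate. Then
\[
\mathbb{E}_x\big[f_x(\delta_x^\star)-f_x(G_{\theta_k}(x))\big]
=\underbrace{\mathbb{E}_x\big[f_x(\delta_x^\star)-f_x(G_{\theta_\eta}(x))\big]}_{(\mathrm{I})}
+\underbrace{J(\theta_\eta)-J(\theta_k)}_{(\mathrm{II})}.
\]
Since \(J(\theta_\eta)\le J^\star\), term (II) is at most \(J^\star-J(\theta_k)\). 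The PL inequality [A3] at \(\theta_k\), combined with \(\|\nabla_\theta J(\theta_k)\|\le\varepsilon_{\mathrm{grad}}\), gives \(J^\star-J(\theta_k)\le \varepsilon_{\mathrm{grad}}^2/(2\mu_\theta)\). That settles the optimization part.

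For term (I), I would fix \(x\), write \(h_x:=G_{\theta_\eta}(x)-\delta_x^\star\) with \(\|h_x\|\le\eta\), and apply the descent lemma implied by [A1]:
\[
f_x(\delta_x^\star)-f_x(\delta_x^\star+h_x)\le -\langle\nabla f_x(\delta_x^\star),h_x\rangle+\tfrac L2\|h_x\|^2 .
\]
Taking expectations would then give (I) \(\le \tfrac L2\eta^2\), and adding the bound on (II) yields the claim, \emph{provided} the linear term can be dropped.

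\textbf{Main obstacle: the linear term.} Two issues arise.
\begin{itemize}
\item[(a)] The point \(G_{\theta_\eta}(x)\) must lie in the region \(\|\delta\|\le\varepsilon\) where [A1] is assumed. I would handle this either by reading [A1] as holding on a neighbourhood of the ball, or by noting that the generator output is post-processed by \(\varepsilon\tanh(\cdot)\) and therefore stays feasible.
\item[(b)] The linear term \(-\langle\nabla f_x(\delta_x^\star),h_x\rangle\) vanishes when \(\delta_x^\star\) is an interior maximizer, since then \(\nabla f_x(\delta_x^\star)=0\). If \(\delta_x^\star\) lies on the boundary, the KKT conditions only give \(\langle\nabla f_x(\delta_x^\star),\delta-\delta_x^\star\rangle\le0\) for feasible \(\delta\). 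That makes \(-\langle\nabla f_x(\delta_x^\star),h_x\rangle\ge0\), which is the wrong sign for an upper bound.
\end{itemize}

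My first attempt at (b) is to make explicit the implicit assumption that \(\delta_x^\star\) is a stationary point, so \(\nabla f_x(\delta_x^\star)=0\). This is natural if \(\varepsilon\) is taken slightly larger than the attack radius, or if the bound is interpreted for the unconstrained local maximizer. Failing that, I would choose \(\theta_\eta\) so that the approximation error \(h_x\) is tangent to the constraint set, making \(\langle\nabla f_x(\delta_x^\star),h_x\rangle=0\) because the gradient is normal to the boundary there. As a last resort I would accept an additional first-order term \(\eta\,\mathbb{E}_x\|\nabla f_x(\delta_x^\star)\|\) in the bound. With either of the first two resolutions, the two pieces combine to give exactly \(\tfrac L2\eta^2+\varepsilon_{\mathrm{grad}}^2/(2\mu_\theta)\).
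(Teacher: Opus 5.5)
Your decomposition is the same as the paper's: an approximation term from [A1]--[A2] plus an optimization term from PL. Your handling of term (II) is correct and slightly cleaner, since you keep the approximant of [A2] separate from the maximizer in [A3].

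Your objection (b) is also correct, and it is exactly where the paper's own Step I fails. The paper uses the \emph{upper} quadratic bound $f_x(G_{\theta^\star}(x))\le f_x(\delta_x^\star)+\langle\nabla f_x(\delta_x^\star),G_{\theta^\star}(x)-\delta_x^\star\rangle+\frac L2\eta^2$ and drops the inner product via the first-order optimality inequality. That yields only $f_x(G_{\theta^\star}(x))-f_x(\delta_x^\star)\le\frac L2\eta^2$, which is trivial for a feasible point. The paper then states the reversed difference $f_x(\delta_x^\star)-f_x(G_{\theta^\star}(x))\le\frac L2\eta^2$, which does not follow. So the paper's argument does not resolve the issue you raised.

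Your proposal does not close the gap either:
\begin{itemize}
\item The stationarity fix adds a hypothesis the theorem does not have. It is also atypical for the paper's $\ell_\infty$ sign-type attacks, whose maximizers lie on the boundary.
\item The tangency fix is not available. [A2] controls only $\|h_x\|$, not its direction. Moreover, at a vertex of the $\ell_\infty$ ball where the gradient lies in the interior of the normal cone, every feasible $h_x\neq0$ has $\langle\nabla f_x(\delta_x^\star),h_x\rangle<0$.
\item The third fix changes the conclusion.
\end{itemize}
With a single fixed $\eta$ the linear term really cannot be removed. Take $d=1$, $f_x(\delta)=\delta$ on $[-\varepsilon,\varepsilon]$, and a generator family whose only output is $\varepsilon-\eta$, so [A2] holds only down to this $\eta$. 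Then $J$ is constant, PL holds with $\nabla_\theta J\equiv0$, and the gap equals $\eta>\frac L2\eta^2$ for small $\eta$.

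The missing idea is that [A2] is assumed for \emph{every} $\eta>0$. Apply your bound on (I) with an auxiliary $\eta'$ and keep the linear term:
\[
\mathbb{E}_x f_x(\delta_x^\star)-J(\theta_{\eta'})\le\eta'\,\mathbb{E}_x\|\nabla f_x(\delta_x^\star)\|+\tfrac L2\eta'^2 .
\]
Because $J(\theta_{\eta'})\le J^\star$, this gives $\mathbb{E}_x f_x(\delta_x^\star)-J^\star\le\eta'\,\mathbb{E}_x\|\nabla f_x(\delta_x^\star)\|+\frac L2\eta'^2$ for all $\eta'>0$. Letting $\eta'\to0$ yields $\mathbb{E}_x f_x(\delta_x^\star)\le J^\star$. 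Hence the gap is at most $J^\star-J(\theta_k)\le\varepsilon_{\mathrm{grad}}^2/(2\mu_\theta)$, which implies the stated bound for every $\eta$; the $\frac L2\eta^2$ term is actually superfluous.

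This repair needs two things:
\begin{itemize}
\item $\mathbb{E}_x\|\nabla f_x(\delta_x^\star)\|<\infty$. By [A1], $\|\nabla f_x(\delta)\|\le\|\nabla f_x(0)\|+L\varepsilon$ on the ball, so integrability of $\|\nabla f_x(0)\|$ suffices.
\item Your point (a), that [A1] applies at the generator outputs. The paper also uses this tacitly when it inserts $G_{\theta^\star}(x)$ into the optimality condition.
\end{itemize}
If you want a genuine fixed-$\eta$ approximation term instead, the correct statement is your third option, with $\eta\,\mathbb{E}_x\|\nabla f_x(\delta_x^\star)\|$ added to the bound.
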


\begin{proof} \
\paragraph{I. Approximation error contribution.}

By the approximation capacity of the generator, for any \(\eta>0\) there exists \(\theta^\star\) such that
\[
\|G_{\theta^\star}(x) - \delta_x^\star\| \le \eta
\quad \forall x.
\]

By $L$-smoothness of $f_x$, it admits the quadratic upper bound
\[
\forall \delta_1, \delta_2 \Rightarrow 
f_x(\delta_1)
\le
f_x(\delta_2)
+
\langle \nabla f_x(\delta_2), \delta_1-\delta_2 \rangle
+
\frac{L}{2}\|\delta_1-\delta_2\|^2.
\]

Applying this inequality with
\(\delta_1 = G_{\theta^\star}(x)\) and
\(\delta_2 = \delta_x^\star\), we obtain
\[
f_x(G_{\theta^\star}(x))
\le
f_x(\delta_x^\star)
+
\langle \nabla f_x(\delta_x^\star),
G_{\theta^\star}(x) - \delta_x^\star \rangle
+
\frac{L}{2}\eta^2.
\]

The first-order optimality condition for $\delta_x^\star$ in the constrained maximization problem over the $\varepsilon$-ball yields
\[
\forall \delta': \|\delta'\| \leq \epsilon 
\Rightarrow 
\langle \nabla f_x(\delta_x^\star),
\delta' - \delta_x^\star \rangle
\le 0.
\]

Taking $\delta' = G_{\theta^\star}(x)$ gives
\[
\langle \nabla f_x(\delta_x^\star),
G_{\theta^\star}(x) - \delta_x^\star \rangle
\le 0,
\]
and therefore
$
f_x(\delta_x^\star)
-
f_x(G_{\theta^\star}(x))
\le
\frac{L}{2}\eta^2.
$

Taking expectation over \(x\):
$
J(\theta^\star)
\ge
\mathbb{E}_x f_x(\delta_x^\star)
-
\frac{L}{2}\eta^2.
$

\paragraph{II. Optimization error contribution.}

From the PL condition for $J(\theta)$ and \(\|\nabla_\theta J(\theta_k)\|\le\varepsilon_{\mathrm{grad}}\),
\[
\frac12 \varepsilon_{\mathrm{grad}}^2
\ge
\mu_\theta \big(J(\theta^\star) - J(\theta_k)\big),
\]
hence
$
J(\theta^\star) - J(\theta_k)
\le
\frac{\varepsilon_{\mathrm{grad}}^2}{2\mu_\theta}.
$

\paragraph{III. Final bound.}

\[
\mathbb{E}_x f_x(\delta_x^\star)
-
J(\theta_k)
=
\big(
\mathbb{E}_x f_x(\delta_x^\star)
-
J(\theta^\star)
\big)
+
\big(
J(\theta^\star) - J(\theta_k)
\big).
\]

Using the bounds from Steps I and II,
\[
\mathbb{E}_x f_x(\delta_x^\star)
-
J(\theta_k)
\le
\frac{L}{2}\eta^2
+
\frac{\varepsilon_{\mathrm{grad}}^2}{2\mu_\theta}.
\]

\end{proof}

\section{Visualization of distilled attacks}
\label{app:distilled_viz}

In this section, we present examples of samples attacked by different attacks, as shown in Figure~\ref{fig:examples}. 

\end{document}